\newcommand{\N}{\mathbb{N}}
\definecolor{codegreen}{rgb}{0,0.6,0}
\definecolor{codegray}{rgb}{0.5,0.5,0.5}
\definecolor{codepurple}{rgb}{0.58,0,0.82}
\definecolor{backcolour}{rgb}{0.95,0.95,0.92}
\lstdefinestyle{csp}{
    backgroundcolor=\color{backcolour},   
    commentstyle=\color{codegreen},
    keywordstyle=\color{magenta},
    numberstyle=\tiny\color{codegray},
    stringstyle=\color{codepurple},
    basicstyle=\footnotesize,
    breakatwhitespace=false,         
    breaklines=true,                 
    captionpos=b,                    
    keepspaces=true,                 
    numbers=left,                    
    numbersep=5pt,                  
    showspaces=false,                
    showstringspaces=false,
    showtabs=false,                  
    tabsize=4,
    morekeywords={var,define,Stop,Skip,if,else,assert}
}
\begin{document}


\title{GRAVITAS: A Model Checking Based Planning and Goal Reasoning Framework for Autonomous Systems}  

%

\author{Hadrien Bride \and
        Jin Song Dong \and
        Ryan Green \and
        Zh\'e H\'ou \and
        Brendan Mahony \and
        Martin Oxenham 
}

\institute{
Hadrien Bride \at
Griffith University \\
\email{h.bride@griffith.edu.au}           
\and
Jin Song Dong \at
National University of Singapore\\
Griffith University\\
\email{j.dong@griffith.edu.au}
\and
Ryan Green \at
Defence Science and Technology Group, Australia\\
University of South Australia\\
\email{ryan.green@adelaide.edu.au}
\and 
Zh\'e H\'ou \at 
Griffith University \\
\email{z.hou@griffith.edu.au}   
\and 
Brendan Mahony \at 
Defence Science and Technology Group, Australia\\
\email{Brendan.Mahony@dst.defence.gov.au} 
\and 
Martin Oxenham \at 
Defence Science and Technology Group, Australia \\
\email{Martin.Oxenham@dst.defence.gov.au} 
}

\maketitle

\begin{abstract}
  While AI techniques have found many successful applications in
  autonomous systems, many of them permit behaviours that are difficult to
  interpret and may lead to uncertain results. We follow the ``verification as
  planning'' paradigm and propose to use model checking techniques to solve
  planning and goal reasoning problems for autonomous systems. We give a new
  formulation of Goal Task Network (GTN) that is tailored for our model checking
  based framework. We then provide a systematic method that models GTNs in the
  model checker Process Analysis Toolkit (PAT). We present our planning and goal
  reasoning system as a framework called Goal Reasoning And Verification for
  Independent Trusted Autonomous Systems (GRAVITAS) and discuss how it helps
  provide trustworthy plans in an uncertain environment. Finally, we
  demonstrate the proposed ideas in an experiment that simulates a survey mission
  performed by the REMUS-100 autonomous underwater vehicle.    
\end{abstract}



\section{Introduction}
\label{sec:intro}

Planning is a central and hard Artificial Intelligence problem that is essential in the development of autonomous systems. Many existing solutions require a
controlled environment in order to function correctly and reliably. However,
there are situations where adaptive autonomous systems are required to run for
a long period of time and cope with uncertain events during the deployment. 
Our work is motivated by the requirements of next generation
autonomous underwater vehicles (AUV) in law enforcement and defence industries. 
Particularly, we are currently developing a decision making system which is suitable for an AUV 
designed to stay underwater for a long time and to have very limited communication
with the outside world. The AUV is expected to carry out survey missions on its
own and report details of its surveillance at semi-regular intervals. During
the mission, the AUV may encounter underwater currents, deep ocean terrain,
fishing boats, objects and places of interest, hostile vehicles etc., each of
which may affect its ability to achieve its goals. The AUV must be able to
decide which goals to pursue when such dynamic events occur and plan tasks to
achieve the goals in an agile manner.

When there are uncertainties in the environment, planning becomes an
even harder problem. In this case, the agent's goal may be affected,
thus both selecting a new goal and re-planning are necessary. This
generally follows a \emph{note-assess-guide} procedure, where
\emph{note} detects discrepancies (e.g.,~\cite{PCP13}),
\emph{assess} hypothesises causes for discrepancies, and
\emph{guide} performs a suitable response~\cite{AOC06}. Differing
from classical planning where the goal is fixed, when a discrepancy
is detected, it is often necessary to change the current goal. Goal
reasoning is about selecting a suitable goal for the planning
process. There have been various formalisms that attempt to solve
planning problems in a dynamic environment, including hierarchical
planning methods, such as hierarchical task networks
(HTN)~\cite{erol1994umcp} and hierarchical goal networks
(HGN)~\cite{shivashankar2012hierarchical}, and goal reasoning systems
such as the Metacognitive Integrated Dual-Cycle Architecture
(MIDCA)~\cite{Cox2016} and the goal lifecycle
model~\cite{RobertsAJAWA15,Johnson2016}.


Although some of the above formalisms have been successfully applied to solve
real life problems, the verification aspect of the problem remains to be
addressed. Usually, planning is solved by heuristic search, but heuristics may miss some cases and produce sub-optimal or even undesired results. The correctness, safety, and
security issues of autonomous systems are particularly important in
mission-critical use cases. To tackle this problem, we
turn to formal methods, which have been used to solve planning problems in
the literature. For example, Giunchiglia et al. proposed to solve planning
problems using model checking~\cite{giunchiglia1999planning}; Kress-Gazit et
al.'s framework translates high-level tasks defined in linear temporal logic
(LTL)~\cite{Pnueli1977} to hybrid controllers~\cite{KressGazit2009}; 
Bensalem et al.~\cite{bensalem2014verification} used verification and
validation (V\&V) methods to solve planning.

Following the above ideas, we propose a model checking based framework for
hierarchical planning and goal reasoning. Model checking is a technique that can automatically verify whether certain properties are satisfied in a model using exhaustive search. Model checking is especially strong in addressing uncertain and concurrent behaviours. It has been successfully applied to modelling and verification of uncertain environments such as network attacks which may involve arbitrary behaviour in communication protocols~\cite{bai2013authscan}. We choose to use the model checker Process Analysis Toolkit
(PAT)~\cite{SLD09} -- a self-contained tool that supports
composing, simulating and reasoning about concurrent, probabilistic and timed
systems with non-deterministic behaviours. Besides, we choose PAT because 
its verification outcome includes a witness trace which can be effectively extracted to form a plan.

Since our planning method is realised in PAT,
we can formulate inconsistency and incompatibility of plans and goals as reachability and LTL properties~\cite{clarke1986automatic}, and verify them at execution time.
For instance, when a new goal is generated during execution,
we can check whether the new goal conflicts with existing goals, and
select a subset of goals that are compatible with each other. 
In addition, we can also verify the planning model itself such that a given
planning model does not output plans that may lead to undesired
events. 
Based on the use of PAT, we propose a novel planning and goal reasoning system called Goal Reasoning And Verification for Independent Trusted Autonomous Systems (GRAVITAS) -- a framework in charge of producing and actuating verifiable and explainable plans for autonomous systems.  
We demonstrate the proposed techniques in a simulation environment which is compatible with modern AUVs.
\section{Preliminaries}
\label{sec:pre}


Model checking~\cite{Clarke2000} is an automated technique for formally verifying
finite-state systems. In model checking,
specifications of finite-states systems, i.e., properties to be
verified, are often expressed in temporal logic
whereas the system to be checked is modelled as a state transition
graphs. Model checking involves a search procedure which is used to
determine whether the model can reach a state that satisfies the
specifications. We briefly introduce the modelling language and the specification language in PAT below.



\paragraph{Modelling language.} Models that can be verified using PAT~\cite{SLD09}
may take several forms, including: CSP\# models,
timed automata, real-time models and probabilistic models. The latter ones are extensions of the CSP\# language.
In this paper, all the examples only use CSP\# -- a high-level
modelling language that extends Tony Hoare's Communicating Sequential Processes~\cite{hoare1978communicating} with C\#. Formally, a CSP\# model is a
tuple $\langle \mathit{Var}, \mathit{init}_G, P \rangle$ where
$\mathit{Var}$ is a finite set of global variables, $\mathit{init}_G$
is the initial valuation of global variables, and $P$ is a process.
Variables are typed: either by a pre-defined type (e.g., boolean,
integer, array) or by any user-defined data type. If the type of a
variable is not explicitly stated, then, by default, the variable is
assumed to be an integer. For instance, an integer variable
$v$ and an integer array $a$ can be defined respectively as follows:

\begin{lstlisting}
var v = 0;
var a[3]:{0..5} = [0(3)];
\end{lstlisting}

\noindent The range of a variable can be specified in the definition. For instance, the
annotation `$:\{0..5\}$' specifies that the value of each
element in $a$ must be in the close interval $[0,5]$. The three values of $a$ are initialised as
$0$s, as denoted by right-hand side of $a$'s declaration -- i.e.,
$[0(3)]$ is equivalent to $[0,0,0]$. 


A CSP\# process is defined using the following syntax:

\begin{lstlisting}[mathescape]
P($x_1,x_2,...$) = Exp;
\end{lstlisting}

\noindent where $P$ is the process name, $x_1,x_2,...$ are the
optional parameters of the process, and $Exp$ is a process
expression, which defines the computation of the process. The running example in
this paper uses the following \emph{subset} of CSP\#, shown in Backus–Naur form:
\begin{lstlisting}[mathescape]
Exp ::=  Stop | Skip | Ev{Prog} $->$ Exp | Exp ; Exp | Exp || Exp 
		| Exp [] Exp | Exp <> Exp | if (Cond) {$Prog_1$} else {$Prog_2$} 
		| [Cond] Exp
\end{lstlisting}
Interested readers can refer to Sun et al.'s paper~\cite{sun2008} for the complete syntax and semantics of CSP\#. The process $Stop$ terminate the
execution of a process. The process $Skip$ does nothing. Let P and Q be CSP\# processes. The process
expression $e\{Prog\} -> P$ first activates the event labelled by $e$ and
executes the statements given by $Prog$, then it proceeds with the execution of
$P$. The statements of $Prog$ are defined by the syntax and semantics of C\#
and can therefore manipulate complex data types. The processes $P;Q$ and $P||Q$
respectively express the sequential and parallel composition of processes $P$
and $Q$. We use $P[]Q$ to state that either $P$ or $Q$ may
execute, depending on which one performs an event first. On the other hand,
$P<>Q$ non-deterministically executes either $P$ or $Q$. The expression $if$
$(Cond)$ $Prog_1$ $else$ $Prog_2$ is self-explanatory.
Finally, the expression $[Cond] \ P$, where $Cond$ is a boolean
expression, defines a guarded process such that $P$ only executes when 
$Cond$ is satisfied.

\paragraph{Specification language.} We can check whether a CSP\# process $P$ satisfies a given specification using
the following expression:

\begin{lstlisting}[mathescape]
#assert P($x_1,x_2,...$) property;
\end{lstlisting}

\noindent where $property$ can be $deadlockfree$ (the process does
progress until reaching a terminating state), $divergencefree$ (the
process performs internal transitions forever without engaging any useful
events), $deterministic$ (the process does not involve non-deterministic
choices), and $nonterminating$ (no terminating states can be reached).

Also, we can check whether the transition system can reach a state where a boolean expression $Cond$ is satisfied using:
\begin{lstlisting}[mathescape]
#assert P($x_1,x_2,...$) reaches Cond;
\end{lstlisting}

\noindent Additionally, we can check whether a process $P$ satisfies a LTL (cf. Huth et al.'s book~\cite[Section~3.2.1]{Huth2004}) formula $F$ using:

\begin{lstlisting}[mathescape]
#assert P($x_1,x_2,...$) |= F;
\end{lstlisting}



\paragraph{PAT output.} When checking LTL properties, PAT produces a counter-example when the
property to be checked cannot be satisfied, and only outputs ``yes'' when when
the property can be satisfied. For reachability properties, which are
widely-used in the planning technique of this paper, PAT outputs different
information. When the desired states cannot be reached, PAT outputs ``no''.
When the desired states can be reached, PAT produces a witness trace of actions
that leads to the desired states. When model checking reachability properties,
the user can specify one of the following \emph{verification engines}: If a breath-first search based
engine is used and the desired states can be reached, then PAT will output the
shortest witness trace, which is useful when finding certain ``optimal'' plans.
Furthermore, the user can tell PAT to output the witness trace that optimises
certain criterion. For example, the following code will produce witness traces
that respectively yield maximum reward and minimum penalty, assuming that
$Cond$ is reachable and $reward$ and $penalty$ are 
predefined variables:

\begin{lstlisting}[mathescape]
#assert P($x_1,x_2,...$) reaches Cond with max(reward);
#assert P($x_1,x_2,...$) reaches Cond with min(penalty);
\end{lstlisting}
\section{Motivating Example}
\label{sec:example}

Surveying underwater areas and reporting
back the locations of potential objects of interest are important usages of AUVs. For instance, in the
search for the missing aircraft from Malaysia Airlines Flight 370, AUVs
were deployed in deep ocean areas to locate debris of the
aircraft~\cite{mh370search}. There are also demands and interests from the defence
industry to demonstrate the abilities to scan underwater areas for naval mines
and dumped arms, as shown in the Wizard of Aus Autonomous Warrior Trial
~\cite{wizardofaus2018}.

In this paper we run an example with the following context that demonstrates a
common survey mission for the AUV: the AUV is to be deployed at the
\emph{initial position} and to be recovered at the \emph{final position}.
During the mission, the AUV is expected to scan two \emph{survey areas} and
record the locations of \emph{objects of interest} upon identification.
Although our technique is general and could be used on all forms of AUVs and UAVs, we
specifically target a torpedo-shaped AUV named REMUS-100~\cite{remus100}, which
is equipped with side scanners that are able to detect surrounding objects. The
side scanners have a scan range of about 15 meters and therefore, in order to
cover large area, the AUV should perform a \textit{lawn mowing} pattern so that the
survey area is fully covered. The overall mission is visualised in
Figure~\ref{fig:overall_mission}.

\begin{figure}[ht!]
    \centering
    \includegraphics[width = 0.6\textwidth]{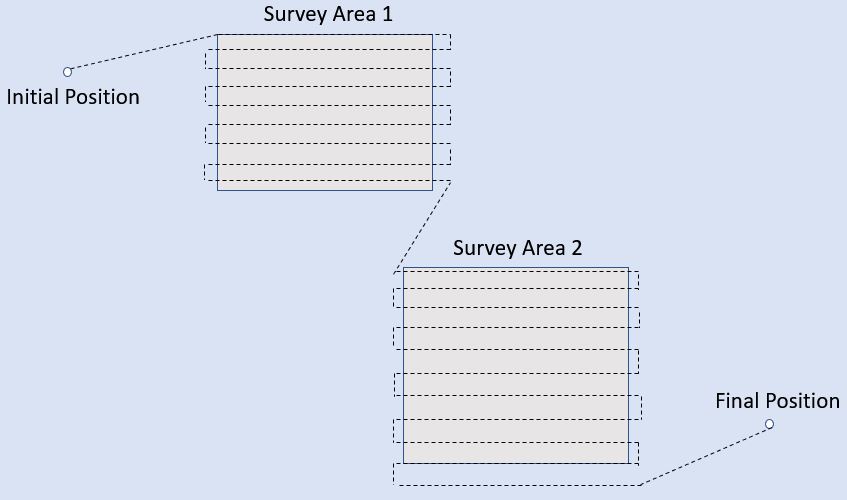}
    \caption{An illustration of the overall survey mission.}
    \label{fig:overall_mission}
\end{figure}

To deal with uncertainties of the environment,
such as changes of survey areas, unexpected events during the transit from one
location to another, our technique must be agile enough to accommodate the
dynamics of the environment. The AUV also needs to make smart decisions
autonomously, these include the order in which to visit the survey areas and
the entry and exit point of each survey area that maximize trajectory
efficiency.
\section{Planning and Goal Reasoning via PAT}
\label{sec:plan}

This section discusses how to solve Goal Task Network planning problems using model
checking. 
We first give a new formalism of the GTN that is suitable for modelling in CSP
\#. We then propose a model checking based approach to model GTN and solve the planning problem. We also discuss how goal selection -- a vital aspect of goal reasoning -- can be done in this approach.

\subsection{Goal Task Networks}
\label{subsec:pre_gtn}

Goal task networks (GTNs) are an extension and unification of
hierarchical task networks and hierarchical goal
networks~\cite{Shivishankarthesis,ASR16}. The main conceptual
advantage of hierarchical task networks (HTNs), when compared to flat-structured
task networks, is their ability to describe dependencies among
actions in the form of nested task networks. HTNs have an
explicit task hierarchy which generally
reflects the hierarchical structure of many real-world planning
applications. This hierarchy has decomposition methods which can then be used during the planning
phase following the well known \emph{divide and conquer} scheme. Due
to this, HTNs planners are much more scalable and performant than
classical planners in practice if the hierarchy is well-designed.

Goal task networks of Alford et al.~\cite{ASR16} are similar to hierarchical task networks but also consider goals and sub-goals in addition to tasks and sub-tasks. As a result, they inherit the advantages of HTNs but also provide flexibility and reasoning capabilities in goal reasoning. We give an adaptation of the original GTN below with a focus on guarded state transitions, which are in the same form as processes in CSP\#.

Let $\mathcal{V} = \{v_0, \cdots, v_{d-1} \}$ be a finite set of variables. Without loss of generality, the state $s$ of a goal task network over $\mathcal{V}$ is defined as a function $s : \mathcal{V} \to \N$ assigning a non-negative integer to each variable of $\mathcal{V}$. 
The set of goal task networks $\mathcal{E}$ is recursively defined as $e \in \mathcal{E} \Leftrightarrow e = \langle E_e, \mathrm{g}_e, \tau_e\rangle$ where: 
\begin{itemize}
	\item $E_e \subseteq \mathcal{E}$ is a finite set of sub-tasks/goals,
	\item $\mathrm{g}_e : (\mathcal{V} \to \N) \to \{\bot, \top\}$ is the guard associated with $e$, and
	\item $\tau_e :  (\mathcal{V} \to \N) \to (\mathcal{V} \to \N)$ is the state transition function associated with $e$.
\end{itemize}

Let $e = \langle E_e, \mathrm{g}_e, \tau_e\rangle$ be a goal task
network. Then $e$ can conceptually represent a task or a goal. In the sequel we shall loosely refer to $e$ as a task or a goal when the context is clear. When
$e$ is a task, its guard models the conditions necessary for the task
to begin. When $e$ is a goal, its guard models the conditions under
which the goal is achieved.

Goal task networks whose set of sub-tasks/goals is empty are called \textit{primitive tasks/goals} and describe the elementary block of goal task network executions.  

The state of a goal task network evolves during its execution
according to the following \textit{firing rules}: A task/goal $e$ is
\textit{enabled} in state $s$ if and only if $\mathrm{g}_e(s) =
\top$. A task/goal enabled in state $s$ can be \textit{fired}, when
it does so, it leads to a new state $s' = \tau_e(s)$.

If $e$ is a primitive task/goal then $s \xrightarrow{e} s'$ denote
the fact that $e$ is enabled in state $s$ and that its firing leads
to state $s'$. If $e$ is not a primitive task/goal then
$s \xrightarrow{e} s'$ denote the fact that there exists a
\textit{valid execution} of $e$ starting in state $s$ and leading to
state $s'$. 

Given an initial state $s_0$, a valid execution of $e$ is a sequence
$e_0, \cdots, e_{n-1} ,e$, of tasks/ goals, where $n \in \N$, such
that $\{e_0, \cdots, e_{n-1}\} \subseteq E_e$ and $s_0
\xrightarrow{e_0} \cdots \xrightarrow{e_{n-1}} s_n \xrightarrow{e}
s_{n+1}$. The set of all valid execution starting from a given state $s$
is denoted by $\Sigma_s$.

A \textit{GTN planning problem} is tuple $P = \langle e, i\rangle$ where $e$ is goal task network and $i$ is the initial state of $e$. The set of solutions for $P$ is the the set of all valid plans $\Sigma_i$, i.e., the set of all valid executions of $e$ starting in state $i$.

A formalised model of our GTN definitions in Isabelle/HOL is available online~\footnote{\url{https://figshare.com/articles/GTN_thy/6964394}}.
The following theorem establishes that our GTN formalism can be used to represent the GTN of Alford et al~\cite{ASR16}. The other direction is not important in the discussion of this paper.

\begin{theorem}
Given a GTN $(I, \prec, \alpha)$ in Alford et al.'s notation~\cite{ASR16} where $I$ is the set of goals and tasks, $\prec$ is a preorder between goals and tasks, and $\alpha$ is a set of labels/names of goal/task instances, there is a corresponding GTN $\langle E, \mathrm{g}, \tau\rangle$ in the above definition.
\end{theorem}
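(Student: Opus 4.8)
The plan is to exhibit an explicit translation $T$ from Alford et al.'s $(I,\prec,\alpha)$ into a tuple $\langle E,\mathrm{g},\tau\rangle$ of the present paper and then to show the two objects have the same solution set. First I would recall the semantics of $(I,\prec,\alpha)$: a solution is a linearisation of (a decomposition of) the instances in $I$ that respects the preorder $\prec$ and whose induced state sequence is compatible with the preconditions and effects attached via $\alpha$. The target object is built by structural recursion over the decomposition hierarchy: a primitive instance $a$ with precondition $\mathrm{pre}_a$ and effect $\mathrm{eff}_a$ is sent to the primitive GTN $\langle \emptyset,\ s\mapsto \mathrm{pre}_a(s),\ s\mapsto \mathrm{eff}_a(s)\rangle$, and a compound instance is sent to $\langle E_e,\mathrm{g}_e,\tau_e\rangle$ where $E_e$ is the (finite) set of translations of its children. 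Since $I$ is finite this keeps $E$ finite, as required.

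The crux is encoding the preorder $\prec$, which the new formalism does not carry as an explicit component, into the guards. For this I would extend the variable set $\mathcal{V}$ with one fresh $\{0,1\}$-valued bookkeeping variable $\mathit{done}_x$ per instance $x\in I$, all initialised to $0$; redefine $\tau_x$ so that it additionally performs $\mathit{done}_x := 1$; and conjoin to $\mathrm{g}_x$ the condition $\bigwedge_{y\prec x}\mathit{done}_y = 1$. With this construction a sequence $e_0,\dots,e_{n-1},e$ is a valid execution in the sense of Section~\ref{subsec:pre_gtn} if and only if the underlying order on instances is a linear extension of $\prec$ whose state trace satisfies every precondition — which is exactly the definition of a solution in Alford et al.'s setting. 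I would make this precise as a simulation-style lemma relating the two transition relations $s\xrightarrow{e}s'$, proved by induction on execution length, and then conclude that $\Sigma_i$ for the translated GTN coincides with the solution set of $(I,\prec,\alpha)$ once the bookkeeping variables are projected away.

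I expect the main obstacle to be the faithful treatment of HTN-style method decomposition: in Alford et al.\ a compound task is not fired itself but replaced by one of several alternative task networks, and $\prec$ may constrain instances that only appear after such a replacement. Capturing this with the fixed nested set $E_e$ requires either pre-expanding all admissible decompositions into the hierarchy (sound when the method set is finite and well-founded) or exploiting the non-deterministic choice already available in the new formalism — in the spirit of CSP\#'s $<>$ operator — to select a decomposition branch, and then re-checking that the $\mathit{done}_x$ bookkeeping still lines up along the chosen branch. A secondary and more routine point is the goal/task distinction: for a goal instance the guard expresses ``goal achieved'' rather than ``ready to start'', so the translation must render goal instances as guarded no-ops whose firing merely records achievement, and I would verify that this does not perturb the solution set. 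As the statement explicitly excludes the converse direction, I would not attempt to characterise which $\langle E,\mathrm{g},\tau\rangle$ arise as such translations.
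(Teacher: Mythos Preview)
The paper does not actually provide a proof of this theorem: the statement appears at the end of Section~4.1 and is followed immediately by the next subsection, with no proof environment or proof sketch in between. The only nearby supporting remark is a pointer to an online Isabelle/HOL formalisation of the \emph{definitions}, not of this result. There is therefore no paper proof to compare your proposal against.

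That said, your plan is considerably more detailed than anything the paper offers, and the central idea---encoding the preorder $\prec$ by introducing fresh $\{0,1\}$-valued variables $\mathit{done}_x$, setting them in $\tau_x$, and conjoining $\bigwedge_{y\prec x}\mathit{done}_y=1$ to $\mathrm{g}_x$---is a sound and standard way to internalise an ordering constraint into guards. The two obstacles you anticipate (faithfully handling HTN-style method decomposition, and rendering goal instances as guarded no-ops) are exactly the delicate points of such a translation. If anything, you are doing more than the theorem as stated strictly demands: the paper only asserts the \emph{existence} of a corresponding GTN and explicitly disclaims the converse direction, so a bare construction without the full solution-set-equivalence lemma would already discharge the claim as written; your simulation lemma is a welcome strengthening rather than a requirement.
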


\subsection{Translating GTN Into CSP\#}
\label{subsec:trans}


Let $e = \langle E_e, \mathrm{g}_e, \tau_e \rangle$ and $\{e_0, \cdots, e_{n-1}\} \subseteq E_e$ be GTNs defined over the set of variables $\mathcal{V} = \{v_0, \cdots, v_{d-1} \}$. Further, let $i$ be the initial state of $e$. The GTN planning problem $P = \langle e, i\rangle$ is modelled as follows.

First, the variables are declared and initialised to their initial values.
\begin{lstlisting}[mathescape=true]
var v0 = i(v0);$\cdots$
var v$_{d-1}$ = i(v$_{d-1}$);
\end{lstlisting}

Second, the GTN $e$ and its sub-GTNs, as well as their sub-GTNs, are recursively defined using to the following template. 
\begin{lstlisting}[mathescape=true]
trans$_{e}$() = [g$_e$] event$_{e}$ {$\tau_e$} $->$ Skip
sub$_{e}$() = e$_0$() <> $\cdots$ <> e$_{n-1}$() ; (sub$_{e}$() <> Skip)
e() = sub$_{e}$() ; trans$_{e}$()
\end{lstlisting}

The process $trans_{e}()$ is guarded by the boolean predicate $\mathrm{g}_e$
and, if executed, transforms the current state into its successor state
according to the state transition $\tau_e$. Note that $\tau_e$ can be
effectively encoded by the set of C\# statements as C\# is Turing complete. Further, the process $sub_{e}()$ models the set of $e$'s sub-GTNs
that can be executed before executing $trans_{e}()$. In $sub_e()$, we use
non-deterministic choices $<>$ to connect the execution of sub-GTNs. The last
part of $sub_e$, i.e., $(sub_e() <> skip)$, allows the process to repeat zero or
more times, which effectively chooses and executes any sub-GTN zero or more times.
This general translation allows the execution of GTN $e$ to be decomposed to
the execution of any subset of sub-GTNs $\{e_0,\cdots,e_{n-1}\}$ for any number
of times. Finally, the CSP\# process $e()$ links the processes $sub_{e}()$ and
$trans_{e}()$ to model the behaviour of the GTN $e$.
 
\begin{theorem}
For every GTN $\langle E, \mathrm{g}, \tau\rangle$, there is a corresponding model in CSP\#.
\end{theorem}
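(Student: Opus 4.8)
The plan is to first make precise what ``corresponding model'' means and then verify that the translation template given above realises it. We read the CSP\# model $M$ built from a planning problem $P = \langle e, i\rangle$ as \emph{corresponding} to $P$ when the set of its terminating execution traces --- each trace taken as the sequence of fired events $\mathrm{event}_{e'}$ together with the valuation it induces on $v_0,\dots,v_{d-1}$ --- coincides with the set of valid plans $\Sigma_i$ of $P$. The construction is itself by structural recursion on the GTN: since every $e \in \mathcal{E}$ is a finite tree (each $E_{e'}$ being finite and the sub-GTN relation $E_{(\cdot)}$ well-founded), the recursion terminates and emits finitely many variable declarations and process definitions --- one triple $\mathrm{trans}_{e'}$, $\mathrm{sub}_{e'}$, $e'()$ per node $e'$ reachable from $e$ --- with the base case (primitive tasks/goals, $E_{e'} = \emptyset$) taken to be $e'() = \mathrm{trans}_{e'}()$, the non-deterministic choice in $\mathrm{sub}_{e'}$ being empty there.

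Then I would carry out the following steps. First, check the faithful encoding of data: a CSP\# valuation of $\{v_0,\dots,v_{d-1}\}$ is exactly a GTN state $s : \mathcal{V} \to \N$, $\mathrm{init}_G$ is $i$, the guard $[\mathrm{g}_e]$ blocks $\mathrm{trans}_e$ precisely when $e$ is not enabled, and the update block $\mathrm{event}_e\{\tau_e\}$ sends valuation $s$ to $\tau_e(s)$; here one uses that $\mathrm{g}_e$ and $\tau_e$ are supplied as C\# code (hence, implicitly, computable) and that C\# is Turing complete. Second --- the technical core --- prove by induction on the structure of $e$ the correspondence lemma: for every valuation $s$, $s \xrightarrow{e} s'$ holds iff the process $e()$ started from $s$ has a terminating run ending in valuation $s'$, and moreover the labels of the sub-GTN subruns along that run, read left to right, form a valid execution of $e$ in the sense of the definition of $\Sigma_s$. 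The base case is immediate from $\mathrm{trans}_e$. For the inductive step one unfolds $\mathrm{sub}_e()$: its terminating runs enumerate exactly the finite sequences $e_{j_1},\dots,e_{j_k}$ over $E_e$ --- every order, with repetitions, and $k=0$ included --- and by the induction hypothesis each $e_{j_\ell}()$-subrun realises some step $s_{\ell-1} \xrightarrow{e_{j_\ell}} s_\ell$ that is a valid execution of $e_{j_\ell}$; appending the run of $\mathrm{trans}_e()$ then yields precisely the chains $s \xrightarrow{e_{j_1}} \cdots \xrightarrow{e_{j_k}} s_k \xrightarrow{e} s'$ required by the definition. Instantiating this lemma at the root $e$ with initial valuation $i$ shows that the terminating traces of $e()$ are exactly $\Sigma_i$, which is the claimed correspondence.

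The step I expect to be the main obstacle is this inductive core, and in particular reconciling the ``one level of decomposition plus recursion through $\xrightarrow{e_j}$'' shape of the valid-execution definition with the operational behaviour of the \emph{recursively defined} processes $\mathrm{sub}_e()$: one must show that $\mathrm{sub}_e()$ generates all, and only, the ordered-with-repetition sequences over $E_e$ (including the empty one), that the interleaving introduced by the choice operators adds no spurious plan and blocks no legitimate one, and that the guard of $\mathrm{trans}_e$ is consulted only after the sub-GTN phase. A subsidiary point worth spelling out is why the internal non-deterministic choice $<>$ (rather than the external choice $[]$) is the right operator here: being resolved internally, it cannot be pruned by any environment, so the trace set of the model is closed under exactly the choices the planner is free to make, which is what makes it equal to $\Sigma_i$ rather than merely a subset of it. The remaining cases --- the primitive/empty-choice base case and the $k=0$ case of $\mathrm{sub}_e$ --- are routine but need to be handled explicitly.
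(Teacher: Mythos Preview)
Your proposal is correct and follows the same constructive route as the paper --- namely, arguing that by the shape of the translation template the transition sequences of the CSP\# process $e()$ are exactly the valid plans of the GTN planning problem $P$. The paper, however, gives only a one-line sketch (``by the construction of the CSP\# process $e()$ and according to the definition of the GTN $e$, all valid transition sequences of the CSP\# process $e()$ correspond to valid plans of the GTN planning problem $P$''), whereas you actually spell out the correspondence notion, the structural induction, the treatment of $\mathrm{sub}_e()$ as enumerating finite ordered-with-repetition sequences over $E_e$, and the primitive base case; all of this is detail the paper omits rather than a different argument.
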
 
 
\begin{proof} (Sketch) 
By the construction of the CSP\# process $e()$ and according to the definition of
the GTN $e$, all valid transition sequences of the CSP\#
process $e()$ correspond to valid plans of the GTN planning problem $P$.
\qed
\end{proof}

\paragraph{Example} Take the overall control of the AUV survey mission as an
example. At this granularity, the GTN is responsible for making high-level
decisions regarding the mission, such as which survey area to visit, in which
order, and how to visit it (enter from which direction and exit from which
direction). Assuming all the predefined locations are stored in an array, a
primitive task at this level is \emph{goto(i)}, which moves the AUV to location
i:

\begin{lstlisting}[mathescape=true]
goto(i) = [visited[i] == 0]go.i {
    currentPosition[0] = position[i][0]; 
    currentPosition[1] = position[i][1]; 
    visited[i] = 1; 
} $\rightarrow$ Skip;
\end{lstlisting}

\noindent In the $goto(i)$ task, the vector $visited[]$ records the status of
each location. The precondition $visited[i] == 0$ ensures that each location is
visited only once. 
Since $goto(i)$ is a primitive task, it does not contain subtasks/subgoals,
therefore, its formulation only involves the guard condition and the transition.

The compound task $survey(i)$ dictates which locations to visit for survey area
$i$. This task does not have explicit state transitions, but instead performs
state transitions in its subtasks ($goto()$ tasks). Following the translation
template, $survey(i)$ is formulated as:

\begin{lstlisting}[mathescape=true]
survey(i) = (goto($i_0$) <> $\cdots$ <> goto($i_n$)); (survey(i) <> Skip);
\end{lstlisting}

\noindent where $i_0,\cdots, i_n$ are the indices of the locations in survey area
$i$.

Similarly, the survey mission is formulated as below:

\begin{lstlisting}[mathescape=true]
mission() = (survey(0) <> $\cdots$ <> survey(m)); (mission() <> Skip);
\end{lstlisting}

\noindent where $0,\cdots, m$ are the indices for survey areas.

The overall GTN involves initialising the start position of the AUV, performing
the survey mission, and returning to the final position for recovery. This is
modelled as below where we omit the code of $initialise()$:

\begin{lstlisting}[mathescape=true]
rendezvous() = goto(finalPosition);
main() = (initialise() <> mission() <> rendezvous()); (main() <> Skip);
\end{lstlisting}

\noindent However, since the motivating example specifies that the three sub-GTNs of $main()$
should be executed sequentially, the above definition can be optimised as
below:

\begin{lstlisting}[mathescape=true]
main() = initialise(); mission(); rendezvous();
\end{lstlisting}

\subsection{Planning Under Resource Constraints}
\label{subsec:resources}


For most planning applications, considering resource constraints,
such as limited amount of available energy, is critical to the
quality and relevance of the produced plan. This is particularly true
in the application domain we consider as strategic
commanders aim at launching AUVs that are meant to operate
autonomously for extended period of time with limited resources.
Therefore, it is essential that these resource constraints are
correctly modelled in order to be able to produce plans that can be
fully realised, i.e., plans that do not require more resources
than available. Also, as unexpected events may arise during
the execution of plans, it is necessary to formulate plans that
minimise resource consumption in order to maximise the AUV's
resilience.

Suppose we wish to consider a finite set of $m$ resources $R = \{r_0,
\cdots, r_{m-1}\}$ and certain tasks that may consume or produce a
finite and discrete amount of one or several of these resources. To
do so we introduce, to the GTN modelling of a planning problem, a set
of $m$ new variables $\mathcal{V}_R = \{v_{r_0}, \cdots,
v_{r_{m-1}}\}$ modelling the amount of available resources. In the
initial state, the values of these variables correspond to the amount
of resources available on launch. When a tasks $e$ consumes
one or several resources, its guard $\mathrm{g}_e$ is extended so
that it can only be executed if the resources needed to perform it
are available before it executes. Additionally, its state transition
function $\tau_e$ is also extended so that it decreases the values of
the resource variables in order to reflect the resources consumed.
Similarly, when a tasks $e$ produces one of several resources, its
state transition function $\tau_e$ is extended so that it increases
the values of the resource variables in order to reflect the
resources produced.

\paragraph{Example} In the motivating example, we wish to model the AUV energy
consumption while moving based on the distance to travel. To do so we introduce
the variable \textit{energyLevel} which models the amount of energy left in the
battery as well as a function \textit{dist} that returns the distance of the
trajectory between two positions and the constant
\textit{energyRequiredByMeter} which is used to scale the energy consumption
linearly with respect to a travelled distance. We then modify the $goto(i)$
implementation as follows:

\begin{lstlisting}[mathescape=true]
goto(i) = [visited[i] == 0 && 
energyLevel >= dist(currentPosition, position[i])] go.i {    
    energyConsumed = dist(currentPosition, position[i]) * energyRequiredByMeter;
    energyLevel $-$= energyConsumed;
    currentPosition[0] = position[i][0]; 
    currentPosition[1] = position[i][1];
    visited[i] = 1; 
} $\rightarrow$ Skip;
\end{lstlisting}

These changes allow the states of the GTN modelling of a planning
problem to encompass available resource quantities and guarantee that
valid plans do not, at any time, consume more resources than
available. Furthermore, these changes also enable us to minimise
resource consumption by maximising the available resource
quantities. However, as several resources may be considered, this
leads to a multi-objectives optimisation problem that is
unfortunately not readily supported by PAT.

We solve this problem by modelling the connections between resources. Note that some resources might be more valuable than others with
respect to the mission objectives. Therefore, to avoid the need for
multi-objectives optimisation capability, we propose to reduce the problem to a
single-objective optimisation. To do so, we suggest the use of an extra
variable $\Lambda$ acting as a \emph{common currency} which is used, among other
things, to evaluate the overall state of resources. To update the value of
$\Lambda$ we require, for each resource $r \in R$, a \textit{conversion function}
$\lambda_r : \N \to \N$ relating the basic unit of a resource as modelled by variable
$v_{r}$ to the basic unit of value of $\Lambda$. Conversion functions used in practice include linear functions, logistic functions as well as exponential and logarithmic functions depending on the nature of the resources. 
Using these conversion functions, we further extend the state transition functions of tasks producing (respectively
consuming) resources so that they increase (respectively decrease) the value of $\Lambda$ accordingly. 
An important aspect of this approach is that it enables the comparison of any two sets of quantified resources by transitivity. As a result, maximising the value of $\Lambda$ minimises the overall resources consumption while accounting for the relative importance of the considered resources. Another important aspect of this approach is that it provides mission operatives with an economic perspective on the complex relations that govern the relative importance of available resources -- a familiar perspective people can relate to in everyday life. 

To illustrate the use of a conversion function we integrate the common currency into
the motivating example by inserting the following line after line 3 of the above code modelling the movement of the AUV, where $r_{energy}$ and $e_{energy}$ are user-defined constant:

\begin{lstlisting}[mathescape=true]
$\Lambda$ $-$= $\lfloor r_{energy}$ * $energyConsumed^{e_{energy}} \rfloor$;
\end{lstlisting}

Continuing with the AUV survey mission example, our model described above already takes the energy cost into account. To find a plan for the modelled GTN with respect to the energy cost, we first need to define
the condition for the overall goal:

\begin{lstlisting}[mathescape=true]
#define goal ($\forall i.$ visited[i] == 1) && (currentPosition[0] == finalPosition[0] && currentPosition[1] == finalPosition[1]);
\end{lstlisting}

\noindent which states that all the locations are visited, and the AUV's
current location is the final position. We then use PAT to find a plan that
yields minimal energy cost by model checking the following assertion:

\begin{lstlisting}[mathescape=true]
#assert main() reaches goal with max($\Lambda$);
\end{lstlisting}    

\subsection{Goal Reasoning}
\label{subsec:goal}

In this section we further discuss the concepts that enable our model checking based approach to deal with run-time goal reasoning.

\subsubsection{Reasoning About Rewards/Penalties of Goals} 

Due to environment constraints and resource constraints, the completion
of one or several goals may not be possible, or perhaps not worthwhile.
Further, goals may not have the same priority. Some goals may be more important
to the success of the mission than others. Additionally, as one of the
underlying directives is to minimise resources consumption, the produced plans
may not consider secondary objectives and only fulfil the minimum requirements
in order to complete the mission if the incentive to do so is not correctly
modelled.

To cope with these challenges, we propose to associate the achievement of a goal
with a reward function relating the goal completion to an amount of the basic unit of value of $\Lambda$ --
the previously introduced variable acting as the common currency. 
In this setting, maximising the value of $\Lambda$ prioritises and incentivises
the completion of goals providing the most rewards while compromising with the resources they require to be completed. 
Further, as the resources conversion functions and the reward functions can be arbitrarily complex arithmetic functions, this provides a way to assess trade-offs between complex, competing criteria for a large number of resources and goals. 

These economic notions therefore lead to the formulation of highly cost-effective plan. Additionally, when multi-agents missions are considered, they provide further benefits as market-based mechanisms~\cite{clearwater1996market} can be leveraged to obtain greater collaboration among agents as well as to optimise resources and tasks allocation. 
These mechanisms also provide non-technical operatives the means to leverage their day to day economic knowledge to specify technical details of the missions that have to be accomplished by the agent.

\paragraph{Example} Returning to the motivating example, we wish to prioritise
the recovery of the vehicle ($rendezvous()$) over the completion of the survey
($mission()$). To achieve this, we first insert the following code into
$goto(i)$ (between the curly braces):

\begin{lstlisting}[mathescape=true]
$\Lambda$ $+$= reward$_{survey}$;
\end{lstlisting}

\noindent We then modify the definition of $rendezvous()$:

\begin{lstlisting}[mathescape=true]
rendezvous() = rend{$\Lambda$ $+$= reward$_{rendezvous}$;} $->$ goto(finalPosition); 
\end{lstlisting}
    
\noindent We set $reward_{rendezvous}$ to be far greater than $reward_{survey}
\times N$ where $N$ is the total number of positions in the model. We also have
to ensure that $reward_{survey}$ is greater than $\lfloor r_{energy}$ * $energyConsumed^{e_{energy}} \rfloor$, otherwise PAT will choose not to visit any
position at all. Finally, we modify the $goal$ so that visiting all positions
and returning to recovery position is no longer mandatory. Rather, we use a
more flexible goal, defined as below:

\begin{lstlisting}[mathescape=true]
#define goal $\forall i \in C.$ visited[i] == 1;
\end{lstlisting}

\noindent where $C$ is a subset of positions that are critical and will
override the optimisation on reward/penalty. Now when we model check 

\begin{lstlisting}[mathescape=true]
#assert main() reaches goal with max($\Lambda$);
\end{lstlisting}

\noindent If the $energyLevel$ is sufficient to visit all positions and go to
the recover position, then PAT will output such a plan with minimal energy
consumption. Otherwise, if the $energyLevel$ is insufficient due to unexpected
events such as strong current, energy spent on detour or surveying uncertain
objects, etc., PAT will try to find a plan that ensures that the positions in
$C$ are visited, and that $rendezvous()$ is far more likely to be executed than
visiting a few more positions.

\subsubsection{Reasoning About Consistency of Goals}
Consider the following scenario: the AUV has finished the survey
mission and now has to report the results. There are two ways to
report: (1) acoustic communication with a nearby friendly surface
vessel; (2) surface and use satellite communication. Suppose there is
no friendly surface vessel nearby, then the AUV will choose the
second method. However, suppose there is a hostile surface
vessel, which the AUV should avoid. Now the AUV has two goals: report
using satellite communication and avoid the hostile surface vessel.
The underlying plans for these two goals have conflicts, and the two
goals should not be pursued at the same time. 


Since PAT can determine whether a condition is satisfiable or not in execution,
we can also use PAT to determine the satisfiability of the conjunction of
several conditions. To solve the above issue, we first formulate the goals as
the conditions below:

\begin{lstlisting}[mathescape=true]
#define goalCompleteSurvey auvCom == 1; 
#define successfulSurvey goalCompleteSurvey && hvContact == 0;
\end{lstlisting}

\noindent The first goal says that the AUV has done the communication, the
second goal is a compound goal that consists of the first goal and
that the AUV does not surface when the hostile vessel is nearby
($hvContact == 0$). We define a task $auvReport$ which
consists of subtasks $auvAcousticCom$ and
$auvSurfaceCom$, which represents communication with friendly
surface vessel and with a satellite respectively. 

\begin{lstlisting}[mathescape=true]
auvAcousticCom() = [fvInRange]comFV{auvCom = 1;} $->$ auvReport();

auvSurfaceCom() = [!fvInRange]comS{auvDepth = 0; energyLevel -= 10; auvCom = 1; if (hostileInRange) hvContact = 1;} $->$ auvReport();

auvReport() = auvAcousticCom() [] auvSurfaceCom();
\end{lstlisting}

\noindent The condition $fvInRange$ checks whether the friendly vessel is in
range for acoustic communication. Verifying the below assertion, which states
that $auvReport$ can reach a state where the communication has been done and
the AUV has not had contact with the hostile vessel, would return negative
by PAT.

\begin{lstlisting}
#assert auvReport() reaches successfulSurvey;
\end{lstlisting}

\noindent This means that the above two goals are incompatible, and
PAT cannot find an execution path to satisfy both. To resolve this
issue, we can add a new task that moves the AUV away from the hostile
vessel, as coded below:

\begin{lstlisting}
auvAvoidContact() = case {
        hostileInRange: auvMove();auvReport()
        default: auvReport()
};
auvReport() = auvAcousticCom() [] auvSurfaceCom() [] auvAvoidContact();
\end{lstlisting}

\noindent Now PAT returns affirmative for the above verification and gives a
plan to achieve the goal $successfulSurvey$.

We can extend this solution to check incompatibility of a set of
goals. Given a set $S$ of goals, we can use PAT as a black-box and implement Algorithm~\ref{alg:muc}~\cite{mucslides} to find the minimal
set of goals that are incompatible. We can also find the set of achievable
goals, and update the model to resolve unachievable goals if necessary. Algorithm~\ref{alg:muc} is an elementary method for efficiently finding the minimal unsatisfiable core of a set of formulae by divide and conquer.

\begin{algorithm}[t!]
    \caption{A simple algorithm for finding minimal unsatisfiable core (MUC). To find a MUC of $S$, call $Minimise (S,\emptyset)$.}
    \label{alg:muc}
    \begin{algorithmic}
        \Procedure{Minimise}{$S$, $S_0$}
        \State Randomly partition $S$ into two sets $S'$ and $S''$ of the same size.
            \If{$S' \land S_0$ is unsatisfiable}
            \State \Return $Minimise(S', S_0)$;
            \ElsIf{$S'' \land S_0$ is unsatisfiable}
            \State \Return $Minimise(S'', S_0)$;
            \Else \Comment{$S' \land S_0$ and $S'' \land S_0$ are both satisfiable}
            \State $S'_{min} \gets Minimise(S', S_0 \land S'')$;
            \State $S''_{min} \gets Minimise(S'', S_0 \land S'_{min})$;
            \State \Return $S'_{min} \land S''_{min}$;
            \EndIf
        \EndProcedure
    \end{algorithmic}
\end{algorithm}

\subsection{Performance Testing} 
\label{subsec:perf-test}


To judge the feasibility and scalability of
the model checking based approach, we have tested two levels of planning
details. (i) The first level consists of finding an order of the areas to
survey so that it minimises the energy cost of the mission. At this level we
abstract away the entry, the internal path and the exit point of each survey
area. The second level (ii) enables the entry and exit point of each survey
area to be determined. These levels respectively
correspond to two GTNs of increasing complexity.

We ran the testing on the NVIDIA Jetson TX2 -- a
power-efficient embedded chip that is equipped in a customised REMUS-100 underwater
vehicle at Defence Science and Technology (DST) Australia. We report the results in Table~\ref{tab:testing_tx2}, in which each configuration is run 5 times and the average of the CPU time and memory usage are displayed. One could theoretically also model the ``lawn mowing'' path inside each survey area, but it is more of an actuation problem than a planning problem, thus we do not test it here.

\begin{table}[t!]
\centering
\begin{tabular}{|l|l|l|l|}
\hline
Level & \# of Survey Areas & avg. CPU Time (s) & avg. Memory Usage (MB)\\
\hline
\multirow{4}{*}{1} & 2 & 0.005 & 8.4\\
& 3 & 0.01 & 8.4\\
& 4 & 0.03 & 8.4\\
& 5 & 0.16 & 11.7\\
\hline
\multirow{4}{*}{2} & 2 & 0.14 & 11.2\\
& 3 & 3.20 & 66.9\\
& 4 & 40.26 & 383.8\\
& 5 & 290.70 & 796.3\\
\hline
\end{tabular}
\caption{Performance testing for planning and goal reasoning in two levels of PAT models. Level 1 decides which survey areas to visit and the order to visit them. Level 2 further decides the entry and exit points of each survey area.}
\label{tab:testing_tx2}
\vspace{-10px}
\end{table}



The model complexity has a significant impact on the run-time and memory usage of the goal reasoning and planning phase. This is not surprising and is mainly due to the explosion of the state-space size -- an issue commonly encountered by model checkers~\cite{valmari1996state}. On the other hand, the REMUS-100 AUV only has a cruising speed of 5.4 km/h, which means that the software has plenty of time to perform re-planning during the mission. The other targeted hardware, the Ocean Glider, is even slower since it relies on water movement to generate forward thrust. We conclude that Level 1 is feasible, and Level 2 is feasible only when the number of survey areas is less than 3. Note that both these levels are high-level operations. We still need to convert high-level operations to low-level operations which can be actuated by the hardware.

The above results highlight the trade-off between performance and guarantees. An approach based solely on model checking is at the moment intractable whereas an approach based solely on heuristics do not provide sufficient guarantees about missions critical elements. Therefore, the above empiric results support the design choice of a hybrid approach for goal reasoning and planning. That is, PAT is suitable for making critical high-level decisions, whereas we need to rely on an external program to translate the high-level plans into low-level plans. The verification of this translation is non-trivial: it includes details such as showing that turning the rudder of the AUV at a certain degree corresponds to going a certain direction in the high-level plan. Such details are hardware-dependent and are not in the scope of this paper.
Nonetheless, a carefully designed GTN at an appropriate level of details can, in the context of a hybrid approach, provide better trustworthiness and reliability for the high-level decision-making.
\section{GRAVITAS: A Trustworthy Framework for Planning and Goal Reasoning}
\label{sec:GRAVITAS}


This section describes the \textit{Goal Reasoning 
And Verification for Independent Trusted Autonomous 
Systems} (GRAVITAS) -- an automated system which enables autonomous
agents to operate with trustworthy high-level plans in a dynamic environment.

\subsection{An Overview of GRAVITAS}
\label{subsec:GRAVITAS_overview}

The GRAVITAS framework follows a cyclic pattern composed of 
four main phases: Monitor, Interpret, Evaluate and Control, which are illustrated in 
Figure~\ref{fig:tas_overview}.

\begin{figure}[ht!]
\centering
\includegraphics[width = 0.9\textwidth]{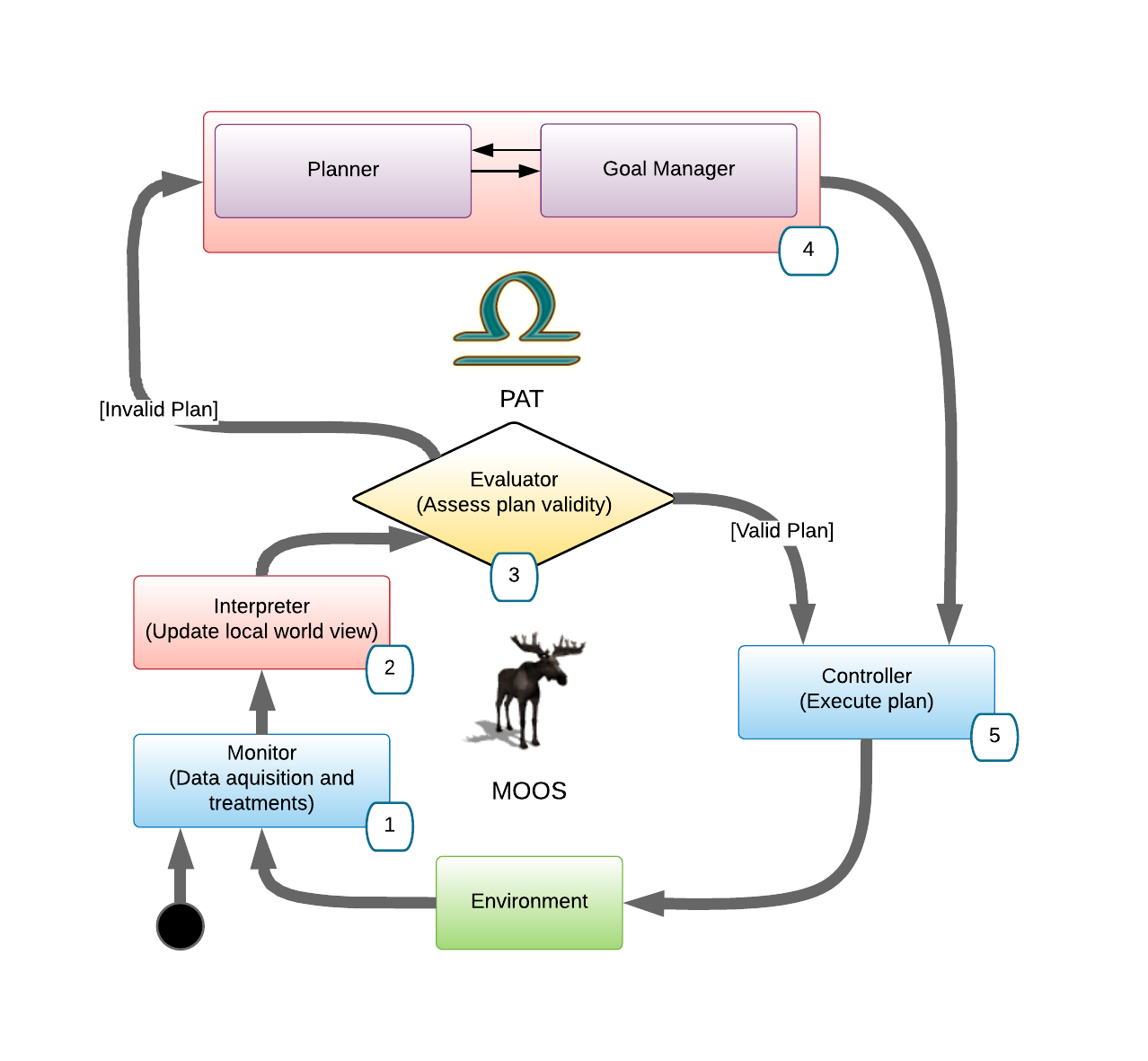}
\caption{Overall workflow of GRAVITAS.}
\label{fig:tas_overview}
\end{figure}

The main operative cycle of GRAVITAS begins with the Monitor (1). This
component perceives the environment through the 
signal processing and fusion of the raw outputs of available sensors. 
For AUVs, examples of sensors includes
accelerometers, gyroscopes, pressure sensors and GPS.
It is also in charge of processing these data in order to provide
information such as the estimated position and the speed of the agent to the 
Interpreter (2). This step notably involves techniques such as target tracking
which we will not detail here~\cite{challa*b}. Once the Interpreter
(2) receives the required information, it updates the agent's local
model of the system and its environment. This formally defined local
model is then forwarded to the Evaluator (3) -- a component in charge
of assessing the validity of the previously established plan with
respect to pre-defined specifications. If the Evaluator assesses the plan as valid, 
the Controller (5) is tasked with executing the plan. Otherwise,
if the Evaluator (3) finds the plan invalid
e.g., an uncertain event creates inconsistencies in the
previously established plan and the mission requirements, a new
plan needs to be formulated. The formulation of a new plan is
accomplished by the joint operation of the Planner and Goals
Manager components (4). After a new plan is formulated, the Controller (5) is tasked 
with executing this plan. This step involves processing based on control
theory~\cite{lee1967foundations} which we do not discuss here.

The components in the lower loop in Figure~\ref{fig:tas_overview} are
orchestrated via the Mission Oriented Operating Suite~\cite{newman2008moos}
(MOOS) -- a middleware mainly in charge of the communication. The main
computational workload of the Evaluator (3), the Planner and the Goal Manager (4)
components are powered by PAT. Note that although conceptually the planner and
the goal manager are two separated components, in our implementation they are
realised in the same PAT model, as discussed in the examples throughout Section~\ref{sec:plan}.
Also, to achieve high efficiency in real-life applications, we use a hybrid approach discussed in Section~\ref{subsec:perf-test}.

\subsection{Verification of Planning and Goal Reasoning Models}
\label{subsec:veri_model}

The key advantage of the model checking based approach is that we can formally verify certain properties for the planning and goal reasoning model. This verification guarantees that the model only permits ``correct'' high-level plans. Since the verified model is directly used to generate high-level plans in the planning and goal reasoning phase, we can ensure that the generated high-level plans not only are optimised by for max rewards (resp. min penalties), but also are ``correct'' with respect to the verified properties.

The verification itself is straightforward since the model is already in CSP\#. We only need to formulate the properties in the specification language (cf. Section~\ref{sec:pre}) and use model checking to verify them. 

\paragraph{Example} In the AUV survey example, we are interested in checking
whether the model would permit an execution sequence in which the AUV hits an
obstacle. The below Boolean condition expresses that the position of AUV does
not overlap with any position of obstacles.

\begin{lstlisting}
#define dontRunIntoObstacle (&& index:{0..iNumberOfObstacles-1}@(obstacles[index][0] != auvPosition[0] || obstacles[index][1] != auvPosition[1]));
\end{lstlisting}

Using LTL, we can check whether this condition holds \emph{for all}
subsequent states in the execution. This is realised by an assertion
of the form

\begin{center}
$p \vdash \Box c$
\end{center}

\noindent where $p$ is a process in CSP\#, $c$ is the condition we
need to check, and $\Box$ is a modality in LTL that means $c$ holds
for all subsequent states. The above verification is realised in the
code below 
\begin{lstlisting}
#assert main() |= [] dontRunIntoObstacle;
\end{lstlisting}
and PAT can automatically return ``yes'' as the result. Thus we obtain the following lemma:

\begin{lemma}
The example planning and goal reasoning model described in Section~\ref{sec:plan} does not generate plans where the AUV runs into any obstacle.
\end{lemma}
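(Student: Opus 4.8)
The plan is to reduce the lemma to the single LTL assertion that PAT has already discharged, and then to argue that every plan the model can generate is realised as (a prefix of) an execution of the process \texttt{main()}. First I would recall that, by the translation theorem of Section~\ref{subsec:trans}, the CSP\# process \texttt{main()} faithfully encodes the GTN planning problem for the survey mission: every valid transition sequence of \texttt{main()} corresponds to a valid plan, and, conversely, every plan that GRAVITAS extracts from a reachability assertion of the form \texttt{\#assert main() reaches goal with max(}$\Lambda$\texttt{);} is a witness trace of \texttt{main()}, hence coincides with the action sequence of some (possibly partial) execution of \texttt{main()}. Consequently it suffices to show that \emph{every state visited along every execution of \texttt{main()}} satisfies the predicate \texttt{dontRunIntoObstacle}, i.e.\ that the AUV position never coincides with an obstacle position. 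This is precisely the property $\texttt{main()} \vdash \Box\,\texttt{dontRunIntoObstacle}$.

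Next I would invoke the soundness of PAT's LTL model checking. The assertion \texttt{\#assert main() |= [] dontRunIntoObstacle;} returns ``yes'', and PAT's verification is exhaustive over the finite state space induced by \texttt{main()}; therefore $\texttt{dontRunIntoObstacle}$ holds on every reachable state of every execution, including the intermediate states produced while firing each primitive task such as \texttt{goto(i)}. Combining this with the plan--execution correspondence above, no plan produced by the model can pass through an obstacle position, which is exactly the statement of the lemma.

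The main obstacle here is the bridging step rather than any calculation: one must argue carefully that plan generation in GRAVITAS only ever yields prefixes of executions of \texttt{main()} — so that no ``spurious'' plan escapes the scope of the $\Box$ operator — and that the state predicate \texttt{dontRunIntoObstacle} is evaluated at the same granularity (over CSP\# states) at which the $\Box$ modality quantifies. A secondary, more mundane point is notational consistency: the obstacle-freedom predicate is written over an \texttt{auvPosition} variable and an obstacle array, which must be identified with the position variables (\texttt{currentPosition}, etc.) actually updated by the \texttt{goto} tasks and with the environment data supplied by the Monitor component. Once these identifications are fixed, the argument reduces to quoting the verified assertion together with the translation theorem, and is immediate.
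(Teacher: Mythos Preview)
Your proposal is correct and follows essentially the same approach as the paper: the lemma is obtained directly from PAT returning ``yes'' on the assertion \texttt{\#assert main() |= [] dontRunIntoObstacle;}. The paper in fact gives no separate proof environment for this lemma at all---it simply states the lemma as an immediate consequence of the verified LTL assertion---so your explicit bridging via the plan--execution correspondence (Theorem~2) and the soundness of PAT's exhaustive search is more detailed than what the paper provides, but not a different route.
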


Using the same technique, we have verified the following lemmas:

\begin{lemma}
The example model described in Section~\ref{sec:plan} does not generate plans where the AUV runs out of battery during the mission.
\end{lemma}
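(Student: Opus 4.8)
The plan is to discharge this lemma in exactly the same way as the preceding one: express ``the AUV never runs out of battery'' as a state predicate and verify it as an LTL safety property with PAT. Concretely, I would introduce a Boolean condition asserting that the battery reserve stays strictly positive throughout the mission and then assert that it holds in all reachable states:

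\begin{lstlisting}
#define hasBattery energyLevel > 0;
#assert main() |= [] hasBattery;
\end{lstlisting}

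\noindent so that the claim reduces to PAT returning ``yes'' on this check (one may relax $>0$ to $\geq 0$ if the battery is permitted to be exactly empty at the rendezvous point).

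To see why the assertion holds --- i.e.\ to exhibit the invariant that PAT's exhaustive search confirms --- note that the only process expressions that decrease \emph{energyLevel} are the state transitions of the resource-aware tasks from Section~\ref{subsec:resources}: principally $goto(i)$, whose transition subtracts $energyConsumed = dist(\cdot,\cdot)\times energyRequiredByMeter$, together with auxiliary tasks such as $auvSurfaceCom()$ which subtract a fixed quantity. Each such task is guarded so that it can fire only when the currently available energy is at least the amount it will consume. Since $main()$ is assembled by the translation template of Section~\ref{subsec:trans} purely out of these tasks (sequenced, chosen non-deterministically, and repeated zero or more times), no firing can push \emph{energyLevel} to zero or below, and therefore every reachable state satisfies \emph{hasBattery}. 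Running the assertion is cheap given the model sizes reported in Table~\ref{tab:testing_tx2}, so the verification step itself is routine.

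The main obstacle is not the model checking but ensuring the guard of \emph{every} energy-consuming task is strong enough to preserve the invariant --- in particular that the quantity tested in the guard matches, or dominates, the quantity actually subtracted in the transition (for $goto(i)$ this means the guard must refer to $dist(\cdot,\cdot)\times energyRequiredByMeter$, not merely $dist(\cdot,\cdot)$, whenever $energyRequiredByMeter > 1$). If some task decrements \emph{energyLevel} without a commensurate guard, PAT would return a counter-example trace rather than ``yes'' and the lemma would fail; the remedy is to tighten that guard exactly as was done for $goto(i)$. Thus the real content of the proof is an audit of the guards across all tasks of the example model, after which the LTL assertion closes the argument mechanically.
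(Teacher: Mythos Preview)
Your proposal is correct and follows essentially the same approach as the paper: the paper simply states that the lemma is verified ``using the same technique'' as the obstacle-avoidance lemma, i.e., by formulating the property as an LTL safety assertion in PAT and checking it mechanically. Your additional invariant-style justification and the observation about the mismatch between the guard \texttt{energyLevel >= dist(...)} and the decrement by \texttt{dist(...) * energyRequiredByMeter} go beyond what the paper provides, but the core verification method is identical.
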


\begin{lemma}
The example model described in Section~\ref{sec:plan} does not generate plans where the AUV surfaces at a location within 3 units of distance of a hostile vessel.
\end{lemma}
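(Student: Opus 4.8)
The plan is to discharge this lemma exactly as the two preceding ones: encode the required clearance as a Boolean state predicate, lift it to an LTL safety property using the $\Box$ modality, and verify the resulting assertion in PAT on the full planning and goal reasoning model of Section~\ref{sec:plan}. The statement ``the AUV never surfaces within $3$ units of a hostile vessel'' is naturally an invariant (it must hold at \emph{every} reachable state), so the $\Box$-assertion form used for \textit{dontRunIntoObstacle} is the right vehicle.

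First I would introduce a predicate, say \textit{safeSurfacing}, capturing the implication ``if the AUV is surfaced then every hostile vessel is at least $3$ units away'', i.e. $auvDepth = 0 \Rightarrow \forall k.\ \mathrm{dist}^2(auvPosition, hostiles[k]) \geq 9$. Since the GTN state assigns non-negative integers to all variables (cf.\ Section~\ref{subsec:pre_gtn}), I would work with the \emph{squared} Euclidean distance against the constant $9$ rather than with a square root, and I would write the implication as a disjunction in CSP\#'s Boolean syntax, by direct analogy with \textit{dontRunIntoObstacle}:
\begin{lstlisting}
#define safeSurfacing (auvDepth != 0 || (&& k:{0..iNumberOfHostiles-1}@((hostiles[k][0]-auvPosition[0])*(hostiles[k][0]-auvPosition[0]) + (hostiles[k][1]-auvPosition[1])*(hostiles[k][1]-auvPosition[1]) >= 9)));
\end{lstlisting}
(If the model already exposes $hostileInRange$ with exactly the intended ``within $3$ units'' meaning, then \textit{safeSurfacing} is equivalent to $auvDepth \neq 0 \lor \lnot hostileInRange$, which gives an even shorter predicate.) I would then assert that this predicate holds along all executions of the top-level process, taking care that the process in question actually sequences the reporting/communication phase — the one containing \textit{auvSurfaceCom()} (which sets $auvDepth = 0$) and its companion \textit{auvAvoidContact()} — since that is the only place surfacing occurs:
\begin{lstlisting}
#assert main() |= [] safeSurfacing;
\end{lstlisting}
Running PAT on this assertion is expected to return ``yes'', which establishes the lemma.

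The hard part will not be the LTL bookkeeping but getting the geometry and the process scope exactly right. On the geometry side, I need the threshold to faithfully match the intended notion of ``within $3$ units'' — in particular the strict/non-strict boundary and consistency with however $hostileInRange$ is actually defined in the model — because an off-by-one there would either make the lemma vacuously true or make PAT report a spurious counter-example. On the process side, the assertion must be checked on a process that genuinely reaches the $auvSurfaceCom()$ branch; checking it on a version of \textit{main()} that omits the reporting phase would verify a trivially true statement. Should PAT return a counter-example instead, the witness trace will pinpoint the offending surfacing event, and — just as with the incompatible-goals scenario earlier in Section~\ref{sec:plan} — the remedy is to strengthen the guard of \textit{auvSurfaceCom()} (or enlarge the displacement produced by \textit{auvMove()} in \textit{auvAvoidContact()}) so that surfacing is enabled only when all hostiles are at least $3$ units away, after which the assertion holds.
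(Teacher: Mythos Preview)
Your proposal is correct and follows exactly the approach the paper uses: the paper establishes this lemma (and the preceding one) simply by stating ``using the same technique'' as for \textit{dontRunIntoObstacle}, i.e., encoding the safety condition as a Boolean state predicate, lifting it with $\Box$, and discharging the resulting LTL assertion in PAT against \textit{main()}. Your write-up is in fact more detailed than the paper's own treatment, which gives no explicit predicate for this lemma; your caveats about the geometry encoding and process scope are sensible but go beyond what the paper records.
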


\subsection{Interacting with Un-trusted Components}
\label{subsec:interact-untrusted}

Although the Level 2 planning and goal reasoning model in Table~\ref{tab:testing_tx2} suffices in our demonstration of the AUV survey mission, there might be other applications where model checking cannot provide detailed plans in time. For instance, the user may need to adopt heuristic-based planning techniques for UAVs and land vehicles because they run faster.  

Inspired by Clarke et al.'s counterexample-guided abstraction refinement~\cite{Edmund2000}, we propose to integrate heuristic-based planning techniques as an ``un-trusted component'' as follows: We treat the heuristic method as an high-level plan generator. Whenever the heuristic method generates a plan, we simulate this plan using the corresponding high-level planning and goal reasoning model, i.e., the \emph{CSP\# model}, in PAT. This simulation is much faster than model checking because we only need to check one path of actions instead of checking all paths. If the simulation is successful, then this plan is in the set of plans that can be generated by the CSP\# model. If the CSP\# model has been verified as described in Section~\ref{subsec:veri_model}, then this plan is correct with respect to the verified properties. If the simulation fails, then we add the old plan into a set of \emph{disabled plans} and constraints the heuristic method such that it does not generate one of the disabled plans. This procedure provides plans that have the same formal guarantee as those generated by PAT, but this procedure may not yield optimal plans. Nonetheless, this procedure provides the means to interact with existing heuristic-based planning techniques generally employed without safe-guards in a reliable way.  


\section{Implementation of GRAVITAS} 

In situ experimentation is very
expensive and slow. While it is mandatory to the final evaluation of the
implementation, in this paper we focus on assessing and demonstrating the
feasibility of the proposed goal reasoning and planning approach in a virtual
environment. Notable challenges include controlling the complexity of the GTN
so that the embedded hardware of the AUV is able to carry the computational
load in a reasonable time (i.e., less than a minute) and the transposition of a
discrete plan as issued by PAT into its continuous counterpart so that it can
be enacted by the AUV. 


We have implemented the proposed approach and integrated
it within a virtual environment closely simulating the mission described in
Section~\ref{sec:example}. We first introduce the integration of PAT within a
\emph{community} of MOOS applications~\cite{newman2008moos}. We then report the
obtained results and discuss the conclusion drawn from them.

The experimental setup is composed of a MOOS application community (referred to as the ``community'' in the sequel) that corresponds to the AUV internal software environment as well as a set of applications that aim at simulating the external environment. This community includes the following: 
\begin{description}
	\item[MOOSDB:] All communication happens via this central server application.
	\item[uSimMarine:] A 3D vehicle simulator that updates vehicle state, position and trajectory, based on the present actuator values and prior vehicle state.
	\item[pMarinePID:] A PID controller for heading, speed and depth.
	\item[pMarineViewer:] A GUI rendering vehicles and associated information during operation or simulation.
	\item[pSideScanner:] A simulator that reports objects identified by the side scanners of the simulated AUV. 
	\item[pPATApp:] The application that integrate PAT and provide goal reasoning and planning ability to the AUV.
\end{description}

\textit{pPATApp} implements the GRAVITAS framework as described in
Section~\ref{sec:GRAVITAS}. It subscribes to and monitors channels which
broadcast information about the general state of the AUV (e.g., position,
speed, heading) as well as information about the objects detected by the side
scanners. Then, at each iteration of its internal loop, it interprets this
information and models a local world view of the environment. Based on this
internal representation and according to the proposed planning approach, it
evaluates the actual plan being enacted and, if required, updates it before
enacting it by publishing the desired heading, speed and depth of the AUV to
the community.

The plan issued by PAT as a part of the re-planning step is a discrete sequence
of primitive tasks (e.g. go to 3D position) that require some processing in order
to be enacted by the AUV as actuators commands (e.g., set heading, set speed).
For instance, the trajectory between several way-points set by the plan has to
be compliant with the maximum turn-rate of the AUV. To solve this issue, as a
proof of concept we implemented an algorithm based on piecewise Bezier curves
composition with continuous curvature constraint for continuous path
planning~\cite{choi2010piecewise}. In the future we plan on using a more
advanced low-level planning approach such as the FMT* algorithm~\cite{janson2015fast}
that will enable us to consider trajectories based on 3D current dynamics as
well as uncertainty in the AUV position.

\section{Simulation in MOOS pMarineViewer} 
\label{sec:exp}

We demonstrate a case study
scenario in a simulation in MOOS. In this scenario, we intend to capture
GRAVITAS's capabilities in dealing with dynamic events during execution. We
create a survey mission similar to Figure~\ref{fig:overall_mission}. Note that
although the following screenshots are in 2D, the simulation is actually in a
3D environment.

\begin{figure}[ht!]
	\begin{subfigure}{.5\textwidth}
	  \centering
	  \includegraphics[width=.98\linewidth]{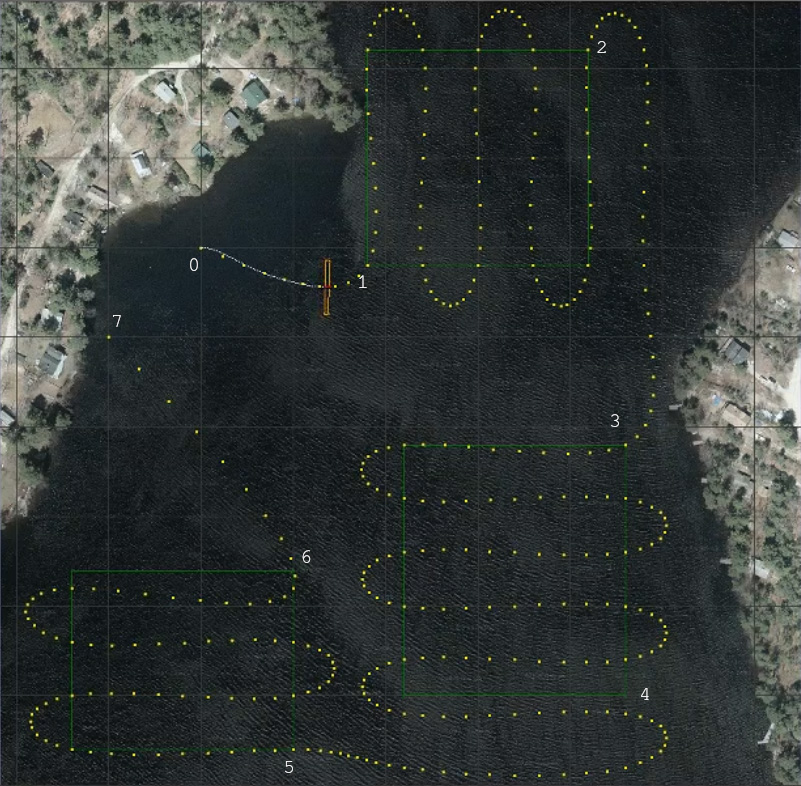}
	  \caption{Initial plan.}
	  \label{fig:survey1}
	\end{subfigure}
	\begin{subfigure}{.5\textwidth}
	  \centering
	  \includegraphics[width=.98\linewidth]{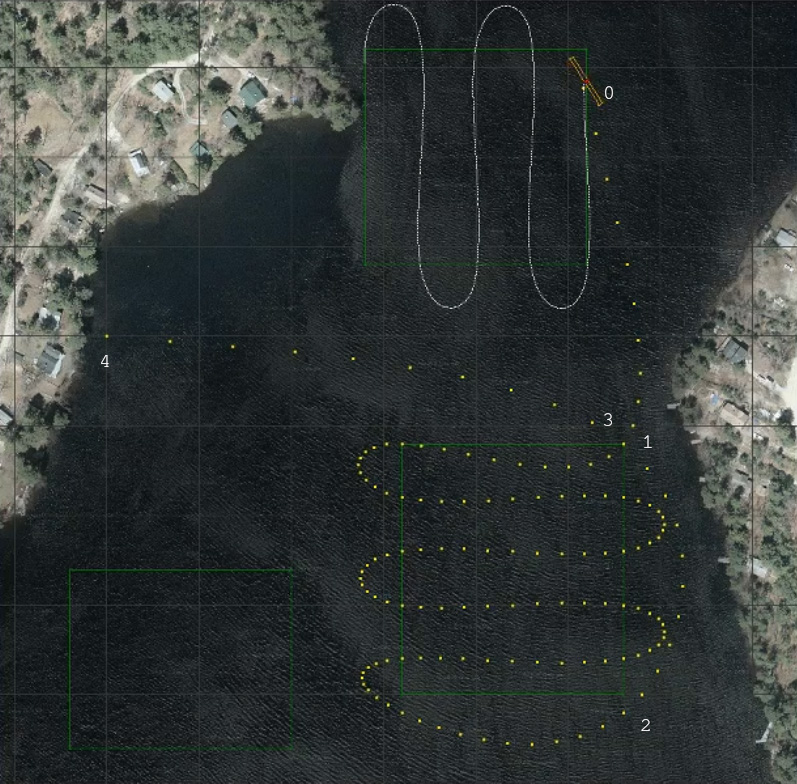}
	  \caption{1st re-planning.}
	  \label{fig:survey2}
	\end{subfigure}
	\begin{subfigure}{.5\textwidth}
		\centering
		\includegraphics[width=.98\linewidth]{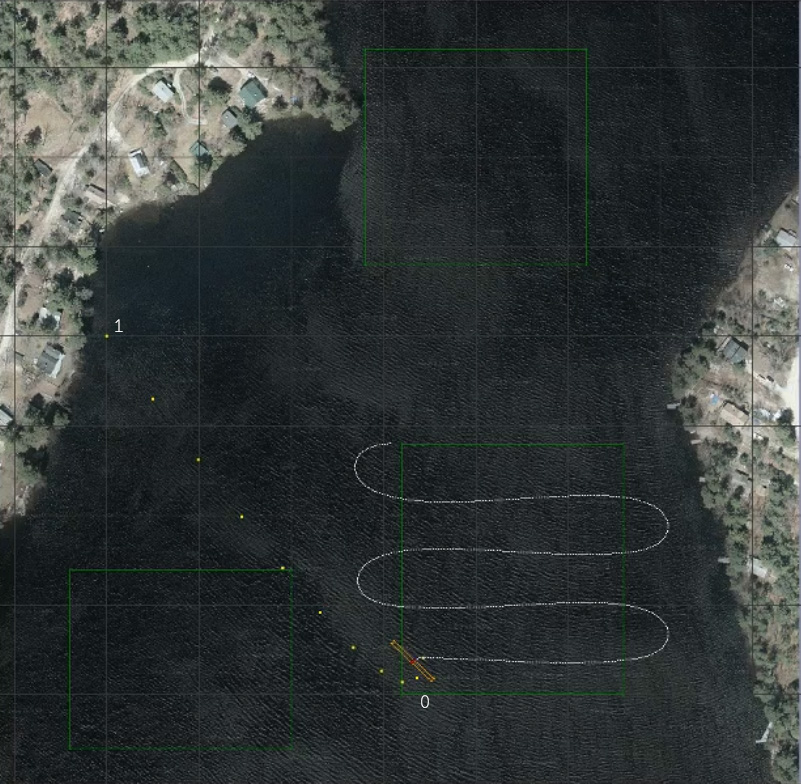}
		\caption{2nd re-planning.}
		\label{fig:survey3}
	\end{subfigure}
	\begin{subfigure}{.5\textwidth}
		\centering
		\includegraphics[width=.98\linewidth]{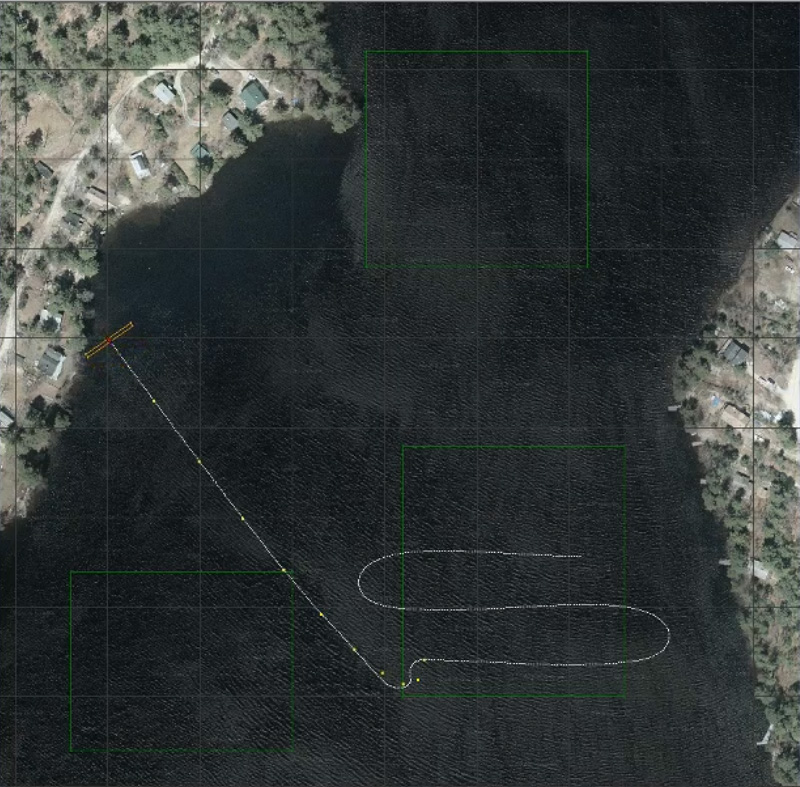}
		\caption{Mission finished successfully.}
		\label{fig:survey4}
	\end{subfigure}
	\caption{Screenshots of a survey mission simulated in MOOS pMarineViewer.}
	\label{fig:fig}
\end{figure}

We set 3 survey areas: lower-left (LL), upper-right (UR), and lower-right (LR),
with rewards 22807, 51918, 31313 respectively. Initially, the AUV has an energy
level of 60000. During execution, we randomly generate a strong water current,
with a chance of 20\%, that doubles the energy consumption for an uncertain
period of time. For simplicity in this example, we trigger goal reasoning and re-planning at
the end of each survey area, although these can be done as frequently as required provided that the computation does not take longer than the interval between re-plannings.

Figure~\ref{fig:survey1} shows the initial plan computed by GRAVITAS. The
numbers indicate the high-level plan computed by PAT and the dots indicate the
low-level plan generated by the actuator. That is, PAT finds the optimal order
as well as the entry and exit points for the survey areas, and the actuator
computes a smooth path that the AUV can follow. The ``wing'' of the AUV
indicates the coverage of the side scan sonar.


In this run, the random generator creates a strong current during the first
survey. As a
result, the expected energy consumption for the first survey is 14400, but the
actual consumption is 23284. Consequently, the Interpreter in GRAVITAS uses a
simple ``learning'' to update the expected energy consumption for future
execution to 162\% of the estimation. However, this unexpected change causes
re-planning in which PAT decides that there is insufficient energy to complete
3 survey areas, and then finds a new plan to optimise the outcome, as shown in
Figure~\ref{fig:survey2}. In the new plan, PAT chooses to only survey area LR
because it yields more reward.


During the transit to the second survey, the water current has returned to
normal. At the end of the second survey, the Controller in GRAVITAS discovers
that survey area LR has been fully covered before going through the last pass
in the ``lawn-mowing'' pattern. Therefore it triggers re-planning and PAT and
the actuator enact a new plan, shown in Figure~\ref{fig:survey3}, to directly
go to the rendezvous point. At the same time, the Interpreter captures that the
expected energy consumption for the second survey is 27216, but the actual
consumption is 17411, so it lowers the scale of future energy consumption to
104\% of the estimation. There is no more strong current on the way to the
rendezvous point, and the expected energy consumption is roughly the same as
the actual value, and the AUV successfully finishes the mission, as shown in
Figure~\ref{fig:survey4}. In this case study, the (re-)planning takes around 1
second, which is fast enough for the operation of AUV.


\section{Related Work}
\label{sec:related}



Different approaches exist according to the
assumptions about the domain, the goals, the plans and the planning
algorithm. Conceptually, the domain evolves according to the
performed actions, a controller provides the actions according to the
observations on the domain and a plan~\cite{Ghallab2016}. 
An example
of applying automated reasoning techniques on planning is Kress-Gazit
et al.'s framework which automatically translates high-level tasks
defined in linear temporal logic formulae to hybrid
controllers~\cite{KressGazit2009}. This framework allows for reactive
tasks, which may change depending on the information the robot
gathers at runtime. This is similar to the goal reasoning literature
where goals may change depending on the environment at runtime.

This work follows 
the ``planning as model checking'' paradigm,
which
dates back to 1990s, e.g., in the work by Giunchiglia and
Traverso~\cite{giunchiglia1999planning}. They proposed to solve
(classical) planning problems model-theoretically, where planning
domains are formalised as semantic models, properties of planning
domains are formalised as temporal formulae, and planning is done by
verifying whether temporal formulae are true in a semantic model.
This idea has been studied and improved in their subsequent
work~\cite{cimatti2000conformant,cimatti1997planning,bertoli2001heuristic},
which involves using Binary Decision Diagram based
heuristic symbolic search. Similar ideas have been used in planners
such as MIPS~\cite{edelkamp2001mips}, which can effectively handle
the STRIPS subset of the PDDL, and some additional features in ADL.


Closely related to the above work is the verification and validation (V\&V)
based method of Bensalem et al.~\cite{bensalem2014verification}. They argue
that constructing correct and reliable planning systems is error-prone due to
the non-deterministic nature of planning problems, thus it is important to
develop V\&V methods for planners to ensure that the generated plans are
correct. To achieve this, the authors proposed to \emph{use} V\&V techniques to
perform planning, and use planning to perform V\&V. This work is similar in the
sense that we are using model checking techniques to perform planning, and
since the planning system is built upon the model checker, we can also verify
correctness and safety issues of the plans and goals. As a result, we can not
only output plans that are efficient in certain criteria, but also those that
are verified safe and correct, which is essential in building trusted
intelligent agent and is often required in mission-critical operations.



Goal reasoning has been used in a number of projects about
controlling autonomous machines in a dynamic environment. 
Many goal reasoning systems follow a \emph{note-assess-guide} procedure,
and extend it with a cycle of executions to handle the dynamics of the environment and perform goal reasoning and re-planing on-the-fly.
Cox et
al.~\cite{Cox2016} propose to use classical planning to formalise
goal reasoning. They present an architecture with a cognitive layer
and a metacognitive layer to model problem-solving and dynamic event
management in self-regulated autonomy. The architecture is realised
in the Metacognitive Integrated Dual-Cycle Architecture (MIDCA) version 1.3,
which is shown useful in experiment. A detailed account is given by
Dannenhauer~\cite{dannenhauer2017self}. 

Roberts et al.~\cite{RobertsAJAWA15} give more detailed definitions
of goal reasoning in their framework. They divide the states and
goals into two parts: the external part is a modified or incomplete
version of the transition system, and the internal part represents
the predicates and state required for the refinement strategies. The
authors use a data structure called \emph{goal memory} to represent
the relationship between goals, subgoals, parent goals etc., and
propose to solve the goal reasoning problem using refinement. They
use a \emph{goal lifecycle} model to capture the evolution of goals
and the decision points involved in the process. The goal lifecycle
includes the formulation, selection, expansion, execution, dispatch,
evaluation, termination, and discard of goals. This model is adapted
by Johnson et al.~\cite{Johnson2016}, who give a system called Goal
Reasoning with Information Measures. In the scenario of controlling
Unmanned Air Vehicles to survey certain areas, the goals are
formulated with parameters such as \emph{maximum uncertainty} in the
search area, \emph{acceptable uncertainty} under which the goal is
considered complete, and \emph{deadline} by which the search must
complete. The goal reasoning method is shown useful for unmanned
aerial vehicles operating in dynamic environments.

A more theoretical foundation about planning and goal reasoning is
surveyed by Alford et al.~\cite{ASR16}. The authors unify HGN
planning and HTN planning into GTN planning. They also provide
plan-preserving translations from GTN problems to HTN semantics.
Several computability and tractability results are given. For
example, GTN, HTN, and HGN are semi-decidable, and a restricted form
called $\text{GTN}_I$ is NEXPTIME. An application of HTN planning realised by symbolic model checking is presented by Kuter et al.~\cite{Kuter2005AHT}. While their work is focused on the theoretical foundation of the problem and they assume full-observability, this paper is more concerned with a more concrete real-life problem: the execution of the AUV in an uncertain environment. Thus this paper is more focused on practical issues that arise when solving the AUV survey problem.


One interesting use case of goal reasoning is goal selection.
Rabideau et al.~\cite{RabideauCM11} give a tractable goal selection
method algorithm specialised for selecting goals at runtime for
re-planning in a system where computational resources are limited and
the complete goal set oversubscribe available resources. Kondrakunta
and Cox~\cite{Kondrakunta2017} also consider the situation where an
agent has more goals than can complete in a given time constraint and
show how an intelligent agent can estimate the trade-off between
performance gains and resource costs. Another important aspect of
goal reasoning is to detect inconsistency or incompatibility of goals
and plans. Tinnemeier et al.~\cite{Tinnemeier2008} propose a
mechanism to process incompatible goals which have conflicting plans.
They argue that the agent should not pursue goals with conflicting
plans, and their mechanism can help the agent choose from
incompatible goals.

An important application of our project is applying the planning and
goal reasoning framework to AUVs. Among many relevant papers, goal reasoning for AUVs~\cite{Wilson2016} is particularly interesting. The authors use a
goal-driven autonomy conceptual model which has three parts: the
planner, the goal controller, and the state transition system. The
goal reasoning problem is formalised in PDDL, which is the standard
language for representing classical planning problems and is widely
used by many planners. The authors test their approach in simulations
where the AUV surveys a defined area and it has to respond (change
the goal) to the actions from a nearby unmanned surface vehicle
dynamically. Cashmore et al.'s work~\cite{Cashmore2018} describes a planning algorithm for AUVs. Like many other related papers, their work assumes certain requirements that are slightly different from our settings. For example, they are focused on temporal planning with time constraints whereas our mission does not have such constraints.

\section{Conclusion}
\label{sec:conc}



This work describes a decision-making framework named GRAVITAS for autonomous systems. The GRAVITAS framework not only provides theoretical foundation for hierarchical planning and goal reasoning, i.e., modelling GTNs using CSP\# and using model checking to perform planning and goal reasoning, but also includes practical implementations via the model checker PAT and the MOOS application community. This framework is ultimately realised on a hardware chip that runs on the REMUS-100 AUV.
Our simulation has shown that the
model checker PAT is sufficient to perform high-level decision making tasks.
We have also developed various auxiliary functionalities in GRAVITAS to extend the
high-level PAT plan into low-level plans for actuation. An important future work is to improve the level of trustworthiness by extending the verification from high-level plans to low-level plans. We are also planning to conduct more realistic simulations, and will attempt to show 
in situ ability of our approach in real-world demonstrations.

\bibliographystyle{spmpsci}
\bibliography{main.bib}


\end{document}